\newif\ifappendix
\newtheorem{proposition}{Proposition}
\newtheorem{definition}{Definition}
\newtheorem{example}{Example}
\newcommand{\red}[1]{\textcolor{red}{#1}}
\newcommand{\todo}[1]{\red{\textsc{todo:} #1}}
\newcommand{\green}[1]{#1}
\newcommand{\alice}[0]{\textit{alice}}
\newcommand{\bob}[0]{\textit{bob}}
\newcommand{\eve}[0]{\textit{eve}}
\newcommand{\sm}[0]{\textit{sm}}
\newcommand{\fr}[0]{\textit{fr}}
\pgfplotsset{compat=1.13}
\title{Induction of Interpretable Possibilistic Logic Theories from Relational Data}
\author{Ond\v{r}ej Ku\v{z}elka \\ 
Cardiff University, UK  \\
KuzelkaO@cardiff.ac.uk \And 
Jesse Davis \\ 
KU Leuven, Belgium  \\
jesse.davis@cs.kuleuven.be \And
Steven Schockaert \\ 
Cardiff University, UK  \\
SchockaertS1@cardiff.ac.uk}
\begin{document}

\maketitle

\begin{abstract}
The field of Statistical Relational Learning (SRL) is concerned with learning probabilistic models from relational data. Learned SRL models are typically represented using some kind of weighted logical formulas, which make them considerably more interpretable than those obtained by e.g.\ neural networks. In practice, however, these models are often still difficult to interpret correctly, as they can contain many formulas that interact in non-trivial ways and weights do not always have an intuitive meaning. To address this, we propose a new SRL method which uses possibilistic logic to encode relational models. Learned models are then essentially stratified classical theories, which explicitly encode what can be derived with a given level of certainty. Compared to Markov Logic Networks (MLNs), our method is faster and produces considerably more interpretable models. 
\end{abstract}

\section{Introduction}
The aim of Statistical Relational Learning (SRL) is to learn models that can make predictions from sets of relational facts.
Many popular SRL frameworks, such as Markov Logic Networks (MLNs \cite{Richardson2006}), \green{probabilistic soft logic \cite{psl}}, and various forms of probabilistic logic programs \cite{DBLP:journals/ml/RaedtK15}, use weighted logical formulas to encode the statistical regularities that have been observed in training data. 
Despite the use of logical formulas, learned models are often surprisingly hard to interpret. Consider, for instance, the following fragment of an MLN that was learned from the UWCSE dataset\footnote{\url{https://alchemy.cs.washington.edu/data/uw-cse/}}:
\begin{align*}
-5.11:&\,\, \textit{student}(X)\\
5.11: &\,\, \textit{professor}(X)\\
-12.01: &\,\, \neg \textit{student}(X) \vee \textit{faculty-adj}(X) \vee \textit{professor}(X)\\
&\quad\quad \vee \textit{faculty}(X) \vee \textit{faculty-aff}(X)
\end{align*}
The first two formulas intuitively mean that, all things being equal, a given individual is unlikely to be a student and likely to be a professor. However, if a given individual is a professor, then the third formula becomes satisfied. Due to the large negative weight of this formula, it turns out that being a student is actually considered to be more likely than being a professor. In practice, there can be many formulas that interact in such a way, making it hard to predict the behavior of the MLN by inspecting the weighted formulas. This limits the usefulness of MLNs for explorative data analysis, and makes it almost impossible for domain experts to manually tweak a learned MLN. 
\green{Probabilistic logic programming (PLP) languages, such as ProbLog, attach probabilities to either rules or facts.
For programs with neither {\em negation as failure} nor cyclic dependencies, the individual probabilities have a clearer meaning than in MLNs in that a rule such as $0.5$ :: $a$ :- $b$ can be interpreted as if $b$ is true, $a$ is true with at least 50\% probability. However, using negation as failure and cyclic dependencies often leads to counter-intuitive PLPs where rules cannot be understood in isolation (e.g.\ see Section 8.3 in \cite{buchman.2017}). Yet, even for propositional PLPs, excluding negation as failure limits the expressivity of the language \cite{buchman.2017}.
}
While the interpretability of AI systems is becoming increasingly important \cite{baehrens2010explain,sanchez2015towards,ribeiro2016should}, we are not aware of any existing methods for learning joint relational models that focus on interpretability.

Possibilistic logic \cite{Lang:1991:LGP:2100662.2100687} also uses weighted formulas, usually written as $(\alpha,\lambda)$ with $\alpha$ a classical formula and $\lambda \in [0,1]$ a certainty weight. As suggested in \cite{kuzelka.ecai.2016}, we can use \emph{possibilistic} logic to encode \emph{probability} distributions. The formula $(\alpha,\lambda)$ then expresses the constraint that the probability of any world violating $\alpha$ can be at most $1-\lambda$. Because of this constraint based semantics, formulas can safely be interpreted in isolation from the rest of the theory, which we believe is crucial for interpretability. The method proposed in \cite{kuzelka.ecai.2016} derives a possibilistic logic theory from a density estimation tree \cite{ram2011density}, which is in turn learned from a set of training examples. Compared to Markov Random Fields (MRFs), the possibilistic logic theories resulted in a higher accuracy for Maximum A Posteriori (MAP) queries with small evidence sets, while MRFs were more accurate for larger evidence sets. Essentially, inference from possibilistic logic theories captures the conclusions that we can obtain by applying a form of commonsense reasoning (see \cite{DBLP:conf/uai/KuzelkaDS15} for a theoretical justification of this view). In the presence of large amounts of evidence, however, MRFs can make predictions even when there is no obvious ``default knowledge'' that applies, by aggregating large amounts of individually weak and/or conflicting pieces of evidence.

In this paper we introduce a method for learning possibilistic logic theories from relational data. In principle, such theories could be learned by ``lifting'' the approach from \cite{kuzelka.ecai.2016} to the relational setting. However, this technique relies on identifying a set of formulas $\alpha_1,...,\alpha_n$ which are mutually exclusive and jointly exhaustive (corresponding to the branches of the density estimation tree). In a relational setting, this essentially requires us to enumerate isomorphism classes, of which there are typically exponentially many. As a result, the  possibilistic logic theories we obtain quickly become prohibitively large (even though these theories could subsequently be pruned). Therefore, we follow a different strategy in this paper. To obtain suitable formulas, we first learn a set of hard constraints. These hard constraints allow us to generate non-trivial negative examples, which together with the positive examples obtained from the training data, allow us to learn a set of Horn rules that describe how the different predicates relate to each other. The restriction to Horn rules increases the interpretability of the learned theories, and leads to theories that are optimized for predicting positive literals, which is usually what is needed in applications. Note that the hard constraints are not restricted to Horn rules, which means that our theories can still be used to predict negative literals. In the last step, we use a form of relational model counting to associate a weight with each of the learned Horn rules.

To the best of our knowledge, our approach is the first that represents joint relational models in such a way that each weighted formula can be interpreted in isolation. 
The most closely related work is \cite{Serrurier2007939}, where first-order possibilistic logic theories were learned in an Inductive Logic Programming (ILP) setting. However, the learned theories from that work are aimed at predicting a single target predicate. Moreover, their approach was based on a non-standard semantics for possibilistic logic, in which formulas cannot be interpreted in isolation. Finally, their approach is purely qualitative, i.e.\ formulas are ranked but are not given weights with a probabilistic interpretation.

\ifappendix

\else
\green{We also provide an online appendix to this paper\footnote{\label{note1}{\tt \todo{ARXIV}}} with additional illustrating examples and experimental results.}
\fi

\section{Preliminaries}
Throughout the paper, we consider a function-free first-order logic language $\mathcal{L}$, which is built from a set of constants $\textit{Const}$, variables $\textit{Var}$ and predicates $\textit{Rel} = \bigcup_i \textit{Rel}_i$, where $\textit{Rel}_i$ contains the predicates of arity $i$. For $a_1,...,a_k \in \textit{Const}\cup \textit{Var}$ and $R \in \textit{Rel}_k$, we call $R(a_1,...,a_k)$ an atom.  If $a_1,..,a_k\in \textit{Const}$, this atom is called ground. A literal is an atom or the negation of an atom, and a clause is a disjunction of literals. The formula $\alpha_0$ is called a grounding of $\alpha$ if $\alpha_0$ can be obtained from $\alpha$ by substituting each variable by a particular constant from $\textit{Const}$. A formula is called closed if all variables are bound by a quantifier. A possible world $\omega$ is defined as a set of ground atoms. The satisfaction relation $\models$ is defined in the usual way.

A Markov logic network (MLN)~\cite{Richardson2006} is a set of weighted formulas $w: F$, with $w\in \mathbb{R}$ and $F$ a function-free and quantifier-free first-order formula. The semantics are defined w.r.t.\ the groundings of the first-order formulas, relative to some finite set of constants. An MLN is seen as a template that defines an MRF. Specifically, an MLN $\mathcal{M}$ induces the following probability distribution on the set of possible worlds $\omega$: 
\begin{eqnarray}
p_{\mathcal{M}}(\omega) = \frac{1}{Z}\mbox{exp}\left(\sum_{w:F \in \mathcal{M}} w n_F(\omega)\right)
\label{e:mln}
\end{eqnarray}
\noindent
where $n_F(x)$ is the number of groundings of $F$ that are satisfied in $\omega$, and $Z$ is a normalization constant to ensure that $p_{\mathcal{M}}$ is a  probability distribution. A key inference task for MLNs is computing the Maximum A Posteriori (MAP) consequences, i.e.\ determining which ground atoms are true in the most probable models of a given set of ground atoms. Formally, $(\mathcal{M},E)\models \alpha$, for $E$ a set of ground atoms and $\alpha$ a ground atom, iff $\forall \omega \,.\, (p_{\mathcal{M}}(\omega) = \max_{\omega'} p_{\mathcal{M}}(\omega')) \Rightarrow (\omega\models \alpha)$.

A possibilistic logic theory \cite{Lang:1991:LGP:2100662.2100687} is a set of weighted formulas $(\alpha,\lambda)$ with $\alpha$ a propositional formula and $\lambda \in [0,1]$. A possibilistic logic theory $\Theta$ induces a mapping $\pi:\Omega \rightarrow [0,1]$, with $\Omega$ the set of  propositional interpretations, which is defined for $\omega \in \Omega$ as:
\begin{align}\label{defPossDistrTheory}
\pi_{\Theta}(\omega) = \min\{ 1- \lambda \,|\, (\alpha,\lambda)\in\Omega, \omega\not\models\alpha\}
\end{align}
The distribution $\pi_{\Theta}$ is called a possibility distribution. The possibilistic logic theories we consider will be constructed such that $\sum_{\omega }\pi_{\Theta}(\omega)=1$, in which case $\pi_{\Theta}$ can be interpreted as a probability distribution.
There is a common inconsistency-tolerant inference relation in possibilistic logic, which is actually the direct counterpart of MAP inference. Specifically, for $E$ a set of propositional formulas and $\alpha$ a propositional formula, we write $(\Theta,E) \models \alpha$ if $\forall \omega \,.\, (\pi_{\Theta}(\omega) = \max_{\omega'} \pi_{\Theta}(\omega')) \Rightarrow (\omega\models \alpha)$. Interestingly, the formulas $\alpha$ which are entailed in this sense can easily be determined syntactically. In particular, for $\mu\in[0,1]$ let $\Theta_{\mu} = \{\alpha \,|\, (\alpha,\lambda)\in\Theta, \lambda\geq \mu\}$. Let $\mu_0$ be the smallest threshold for which $\Theta_{\mu_0} \cup E$ is consistent. Then $(\Theta,E) \models \alpha$ iff $\Theta_{\mu_0} \cup E \models \alpha$. Hence inference in possibilistic logic can straightforwardly be implemented using a SAT solver.

 In this paper we will learn possibilistic logic theories with first-order formulas instead of propositional formulas. Like MLNs, these first-order possibilistic logic theories should simply be seen as templates for normal (propositional) possibilistic logic theories that are obtained by replacing each weighted first-order formula $(\alpha,\lambda)$ by the formulas $(\alpha_1,\lambda),...,(\alpha_k,\lambda)$, with $\alpha_1,...,\alpha_k$ the groundings of $\alpha$. It is easy to see that when $p_{\mathcal{M}} = \pi_{\Theta}$ for an MLN $\mathcal{M}$ and first-order possibilistic logic theory $\Theta$, it holds that $(\mathcal{M},E)\models \alpha$ iff $(\Theta,E)\models \alpha$.  \cite{DBLP:conf/uai/KuzelkaDS15} demonstrated how to construct a possibilistic logic theory $\Theta$ from a given MLN $\mathcal{M}$, such that $p_{\mathcal{M}} = \pi_{\Theta}$. However, the resulting possibilistic logic theory is exponential in size. In practice, the possibilistic logic theories we learn from data can thus only approximate what could be encoded in an MLN. This makes MLNs potentially better equipped to make predictions from large amounts of evidence, while making possibilistic logic less prone to making spurious predictions in situations where the amount of evidence is more limited.

\section{Relational Marginals}

In the context of SRL, we are typically given a large set of ground atoms $\mathcal{A}$ as training data. This set essentially corresponds to a single example of a relational structure. Intuitively, we want to learn a probability distribution over such relational structures, but we clearly cannot estimate such a distribution from one example. The solution we propose is to construct a large number of training examples by sampling small fragments of this global relational structure, and then estimating a probability distribution over these fragments. We will refer to $\Upsilon = (\mathcal{A},\mathcal{C})$, with $\mathcal{C}$ the set of constants appearing in $\mathcal{A}$, as an example. We now explain how we can obtain a collection of ``local'' training examples, which will correspond to (isomorphism classes of) fragments of this ``global'' example.

\begin{definition}
A (global) example is a pair $(\mathcal{A},\mathcal{C})$, with $\mathcal{C}$ a set of constants and $\mathcal{A}$ a set of ground atoms which only use constants from $\mathcal{C}$. Let $\Upsilon = (\mathcal{A},\mathcal{C})$ be an example and $\mathcal{S}\subseteq \mathcal{C}$. The fragment $\Upsilon\langle S \rangle = (\mathcal{B},\mathcal{S})$ is defined as the restriction of $\Upsilon$ to the constants in $\mathcal{S}$, i.e.\ $\mathcal{B}$ is the set of all atoms from $\mathcal{A}$ which only contain constants from $\mathcal{S}$. 
\end{definition}

\noindent Intuitively, we can repeatedly sample subsets $\mathcal{S}$ and then consider each $\Upsilon \langle S \rangle$ as a training example. However, the constants appearing in each of these fragments will be different, hence to enable generalization we need to consider their isomorphism classes.

\begin{definition}[Isomorphism]
Two examples $\Upsilon_1 = (\mathcal{A}_1,\mathcal{C}_1)$ and $\Upsilon_2 = (\mathcal{A}_2,\mathcal{C}_2)$ are isomorphic, denoted as $\Upsilon_1 {\approx} \Upsilon_2$, if there exists a bijection $\sigma : \mathcal{C}_1 \rightarrow \mathcal{C}_2$ such that $\sigma(\mathcal{A}_1) = \mathcal{A}_2$, where $\sigma$ is extended to ground atoms in the usual way. 
\end{definition}

\begin{definition}[Local example]
Let $k\in \mathbb{N}$ and let $\mathcal{L}_k$ be the language which contains the same predicates and variables as $\mathcal{L}$ but only constants from the set $\{1,2,...,k\}$. A local example of width $k$ is a pair $\omega=(\mathcal{A},\{1,...,k\})$, where $\mathcal{A}$ is a set of ground atoms from the language $\mathcal{L}_k$. For an example $\Upsilon = (\mathcal{A},\mathcal{C})$ and $\mathcal{S}\subseteq \mathcal{C}$, we write $\Upsilon[\mathcal{S}]$ for the set of all local examples of width $|\mathcal{S}|$ which are isomorphic to $\Upsilon\langle \mathcal{S} \rangle$. 
\end{definition}

\noindent To distinguish local examples from global examples, we will denote them using lower case Greek letters such as $\omega$ instead of upper case letters such as $\Upsilon$.

\begin{example}\label{ex1}
For $\Upsilon = (\{ \fr(\alice,\bob),\allowbreak\fr(\bob,\alice),\allowbreak\fr(\bob,\allowbreak\eve),\allowbreak \fr(\eve,\bob),\allowbreak \sm(\alice) \},\allowbreak \{\alice,\allowbreak \bob,\allowbreak \eve \})$ we have:
\begin{align*}
\Upsilon\langle \{ \alice, \bob \} \rangle {=}& (\{ \fr(\alice,\bob), \fr(\bob,\alice),\sm(\alice) \},\\
&\quad\{\alice,\bob\})\\
\Upsilon[\{ \alice, \bob \}] {=}& \{ (\{ \fr(1,2), \fr(2,1),\sm(1) \}, \{1,2\}) \\
 &\phantom{\{} (\{ \fr(2,1), \fr(1,2),\sm(2) \}, \{1,2\}) \}
\end{align*}

\end{example}

\noindent We can now naturally define a probability distribution over local examples of width $k$.
\begin{definition}[Relational marginal distribution]
Let $\Upsilon = (\mathcal{A},\mathcal{C})$ be an example and $k\in \mathbb{N}$. The relational marginal distribution of $\Upsilon$ of width $k$ is a distribution $P_{\Upsilon,k}$ over local examples, where $P_{\Upsilon,k}(\omega)$ is defined as the probability that $\omega$ is sampled by the following process:
\begin{enumerate}
\item Uniformly sample a subset $\mathcal{S}$ of $k$ constants from $\mathcal{C}$.
\item Uniformly sample a local example $\omega$ from the set $\Upsilon[\mathcal{S}]$.
\end{enumerate}
For a closed formula $\alpha$, we also define:
$$
P_{\Upsilon,k}(\alpha) = \sum_{\omega : \omega \models \alpha} P_{\Upsilon,k}(\omega)
$$
\end{definition}



\noindent 



\noindent In the following, constant-free existentially-quantified conjunctions of atoms will play an important role, as they are the syntactic counterpart of the isomorphism classes $\Upsilon[\mathcal{S}]$. For such a conjunction $\alpha$, it holds that $P_{\Upsilon,k}(\alpha)$ is equal to the probability that a randomly sampled set $\mathcal{S}$ of $k$ constants satisfies $\Upsilon \langle \mathcal{S} \rangle \models \alpha$. In this sense, relational marginal distributions faithfully model the probabilities of isomorphism classes of local examples. Naturally, other probability distributions on local examples might also faithfully model the probabilities of these isomorphism classes, but it is easy to see that relational marginal distributions have the highest entropy among such models. 

The idea of relational marginals is similar to the random selection semantics used in \cite{Schulte2014}, but the difference is that for relational marginals, we restrict the sample sets to have fixed cardinality and then standardize them as local examples. This allows us to construct a standard probability distribution over local examples.

\section{Possibilistic Logic Encoding of Relational Marginals}

In this section we describe how relational marginals can be encoded in possibilistic logic. As we show first, in principle we can use a direct generalization of the approach from \cite{kuzelka.ecai.2016}, by taking advantage of the fact that each isomorphism class $\Upsilon[\mathcal{S}]$ of local examples corresponds to a constant-free existentially-quantified conjunction of atoms. For an example $\Upsilon$, let $g_k(\Upsilon)=\{\alpha_1,...,\alpha_n\}$ be a set that contains one such formula for each isomorphism class of local examples of width $k$.

\begin{definition}[Possibilistic encoding of relational marginals]\label{def1}
Let $\Upsilon$ be an example and let $k \in \mathbb{N}$. The possibilistic logic theory corresponding to $P_{\Upsilon,k}$ is defined as
$$
\Theta_{\Upsilon,k} = \left\{ \left. \left(\neg \alpha, 1-\frac{1}{c(\alpha)} P_{\Upsilon,k}(\alpha) \right) \right| \alpha \in g_k(\Upsilon) \right\}
$$
where $c(\alpha)$ is the cardinality of the isomorphism class represented by $\alpha$.
\end{definition}


\begin{proposition}
Let $\Upsilon$ be an example, $k \in \mathbb{N}$, and $\omega$ a local example of width $k$. It holds that $P_{\Upsilon,k}(\omega) = \pi(\omega)$ where $\pi(.)$ is the possibility distribution associated with $\Theta_{\Upsilon,k}$.
\end{proposition}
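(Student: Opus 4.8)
The plan is to unfold the possibilistic semantics of $\Theta_{\Upsilon,k}$ at the world $\omega$ and match it termwise against the sampling definition of $P_{\Upsilon,k}(\omega)$. First I would substitute the weights from Definition~\ref{def1} into the possibility distribution \eqref{defPossDistrTheory}. Since each pair in $\Theta_{\Upsilon,k}$ has the form $\left(\neg\alpha,\, 1 - \frac{1}{c(\alpha)}P_{\Upsilon,k}(\alpha)\right)$, and $\omega \not\models \neg\alpha$ is equivalent to $\omega \models \alpha$, the distribution simplifies to
$$\pi(\omega) = \min\left\{ \tfrac{1}{c(\alpha)} P_{\Upsilon,k}(\alpha) \;\middle|\; \alpha \in g_k(\Upsilon),\ \omega \models \alpha \right\}.$$
The whole argument then reduces to identifying which formulas $\alpha$ are satisfied by $\omega$ and showing that the minimum over them collapses to $P_{\Upsilon,k}(\omega)$.

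Second, I would use the fact that the formulas in $g_k(\Upsilon)$ are the canonical descriptions of the isomorphism classes of width-$k$ local examples, so that a width-$k$ local example $\omega$ satisfies the formula $\alpha^\star$ associated with its own isomorphism class, and only that one. This mutual-exclusivity-and-exhaustiveness property means the minimum above ranges over a single term, hence $\pi(\omega) = \frac{1}{c(\alpha^\star)} P_{\Upsilon,k}(\alpha^\star)$. Verifying this reduction carefully is the step that needs the most care, since it is what rules out contributions from other classes.

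Third, I would establish the counting identity $P_{\Upsilon,k}(\alpha^\star) = c(\alpha^\star)\cdot P_{\Upsilon,k}(\omega)$. Writing $C$ for the isomorphism class of $\omega$, I would observe from the two-stage sampling process (uniform choice of $\mathcal{S}$, then uniform choice within $\Upsilon[\mathcal{S}]$) that every local example in $C$ receives the same probability, namely $\frac{1}{c(\alpha^\star)}\Pr[\Upsilon\langle\mathcal{S}\rangle \in C]$, because for any such representative the condition $\Upsilon\langle\mathcal{S}\rangle \approx \omega$ is identical and $|\Upsilon[\mathcal{S}]| = c(\alpha^\star)$ whenever it is nonempty. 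Summing $P_{\Upsilon,k}$ over the $c(\alpha^\star)$ members of $C$ then gives $P_{\Upsilon,k}(\alpha^\star) = \Pr[\Upsilon\langle\mathcal{S}\rangle \in C] = c(\alpha^\star)\, P_{\Upsilon,k}(\omega)$, using that $\omega' \models \alpha^\star$ precisely for the members of $C$. Dividing by $c(\alpha^\star)$ and combining with the second step yields $\pi(\omega) = P_{\Upsilon,k}(\omega)$.

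The main obstacle I anticipate is the second step: making rigorous that $\omega$ satisfies exactly the formula of its own class. A purely existential conjunction of atoms is satisfied by any fragment that merely \emph{contains} the pattern, so a literal positive-existential reading would let $\omega$ satisfy the formulas of ``poorer'' classes as well and would inflate the minimum; the argument therefore relies on the formulas being read as complete descriptions of their classes, so that satisfaction coincides with class membership. This is exactly what makes the $\alpha \in g_k(\Upsilon)$ behave like the mutually exclusive, jointly exhaustive branches used in the propositional construction of \cite{kuzelka.ecai.2016}. Once that is pinned down, the remaining computation is the routine counting identity of the third step.
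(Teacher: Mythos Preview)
Your proposal is correct and follows essentially the same route as the paper: identify the unique $\alpha^\star\in g_k(\Upsilon)$ with $\omega\models\alpha^\star$, conclude $\pi(\omega)=P_{\Upsilon,k}(\alpha^\star)/c(\alpha^\star)$, and finish via the equiprobability of local examples within an isomorphism class. The concern you flag in the second step is real but is handled in the paper exactly as you suggest---by taking it as definitional that the formulas in $g_k(\Upsilon)$ partition width-$k$ local examples into their isomorphism classes, so satisfaction coincides with membership.
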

\begin{proof}
Let $\omega$ be a local example of width $k$. By definition, $g_k(\Upsilon)$ contains a unique formula $\alpha^*$ such that $\omega \models \alpha^*$, since the formulas in $g_k(\Upsilon)$ define a partition of local examples into isomorphism classes.
Accordingly, $\neg \alpha^*$ is the unique formula appearing in $\Theta_{\Upsilon,k}$ which is not satisfied by $\omega$. By \eqref{defPossDistrTheory}, we therefore have $\pi(\omega) = 1-(1-P_{\Upsilon,k}(\alpha^*)/c(\alpha^*) = P_{\Upsilon,k}(\alpha^*)/c(\alpha^*) = P_{\Upsilon,k}(\omega)$, where the last equality hold because all local examples from the same partition class have the same probability in a relational marginal distribution.
\end{proof}

\noindent The number of isomorphism classes typically grows very quickly with increasing $k$, so the exact transformation from Definition \ref{def1} can only be used for very simple problem domains. In practice, representing the relational marginal distribution exactly is typically not feasible. An exact representation would moreover not necessarily generalize well to previously unseen data. Therefore, for the remainder of this paper, we will focus on learning approximate possibilistic logic representations of relational marginal distributions. 

Specifically, our aim is to construct a possibilistic logic theory $\Theta = \{(\alpha_1,\lambda_1),...,(\alpha_n,\lambda_n)\}$ such that for the associated possibility distribution $\pi$ it holds that $\pi(\omega)$ is approximately equal to $P_{\Upsilon,k}(\omega)$. This problem can be decomposed in two steps. The first step is structure learning, i.e.\ choosing suitable formulas $\alpha_1,...,\alpha_n$. In this paper, we will only consider constant-free and quantifier-free formulas. However, recall that first-order possibilistic logic theories are seen as templates for propositional theories, which means that all variables in the formulas $\alpha_1,...,\alpha_n$ are implicitly universally quantified. The second step is weight learning. In this step, we aim to find the weights $\lambda_1,...,\lambda_n$ for which $\pi$, seen as a probability distribution, maximizes the likelihood of a set of training examples. Note that if $\lambda_1 \leq ... \leq \lambda_n$ we can assume w.l.o.g.\ that $\alpha_1 = \bot$. We need to include such a formula $\alpha_1$ to encode the probability of the most probable worlds (which is then given by $1-\lambda_1$).

As the transformation from Definition \ref{def1} illustrates, weight learning becomes very simple when using mutually exclusive formulas. However, using mutually exclusive formulas is not desirable, as such formulas quickly become very large\footnote{One exception is when $k=1$, which corresponds to the propositional case, where density estimation trees can be used, as was proposed in \cite{kuzelka.ecai.2016}.}, which also makes the resulting theories difficult to interpret. Therefore, in practice, we will rely on greedy methods for weight learning. These will be discussed in Section \ref{secWeightLearning}. 

\section{Structure Learning}


In this section, we propose a method to learn Horn rules that can be used to predict all predicates from $\Upsilon$. Using Horn rules makes the resulting possibilistic logic theories more interpretable, and allows us to optimize them for predicting atoms, which is what is usually required. Learning Horn rules using methods based on inductive logic programming \cite{ilp} typically requires both positive and negative training examples. In Subsection \ref{secConstructingExamples} we explain how to construct examples and then  discuss our method for learning Horn rules in Subsection \ref{secHornRules}.

\subsection{Constructing Training Examples}\label{secConstructingExamples}

\green{Constructing positive examples for a given predicate $P$ is straightforward: we can simply take all, or a subsample, of the true $P$-atoms from $\Upsilon$, Typically, there are significantly more negative examples than positive ones; e.g. in a typical social network there are many more examples of non-friends than of friends. Simply subsampling the negative examples is unlikely to be effective, as most of the resulting negative examples might be uninteresting, in the sense that they can be explained by some simple hard rules that hold for the domain. Hence, we first learn a set of such hard rules, and then only consider negative examples that are consistent with them.}

We are interested in hard rules that are universally quantified, constant-free clauses with no counterexamples in $\Upsilon$. We find such clauses 
by exhaustively constructing all clauses (modulo isomorphism) containing at most $t$ literals and at most $k$ variables, where $k$ is the width of the relational marginal distribution and $t$ is a parameter of the method. For each clause, we check  whether $\Upsilon \not\models \neg \alpha$ holds with a CSP solver. We store each such clause in a list if the list does not contain another clause that subsumes it. 
Because learning hard rules that only contain unary literals is typically easier than learning more general rules, we use a higher size limit $t' > t$ for these rules.

Let $\Delta$ be the set of discovered hard rules, and $\Upsilon = (\mathcal{A},\mathcal{C})$ be the global example. To select negative training examples, we reject all samples $a$ for which $\bigwedge \mathcal{A} \wedge a \wedge \Delta$ does not have a model when grounded\footnote{We use incremental grounding for efficiency.} over the set of constants $\mathcal{C}$. The result is a subsample of non-trivial negative examples. In addition, this process allows us to estimate the total number of non-trivial negative examples, which we use to compute the weight of the negative examples when estimating the accuracies of the Horn rules. 

\subsection{Learning Horn Rules}\label{secHornRules}

To find Horn rules, we employ a beam search method, which relies on two parameters: the size of the beam $b$ and the maximum number of literals in the body of a rule $l$. As before, $k$ is the width of the local examples. For a given target predicate $P$ of arity $m$, we initialize the list of candidate rules with the rule $P(X_1,...,X_m) \leftarrow \top$. 
In each iteration of the search, we construct all possible single-literal extensions of each rule in the beam such that the constraints on the number literals and variables are not violated. 
From these candidate rules, we select a set of non-isomorphic rules and evaluate their accuracy on the (weighted) sets of positive and negative examples. The algorithm then selects the $b$ most accurate rules to serve as the candidate rules for the next iteration. 
The algorithm terminates when no new candidate rules can be generated without violating the constraints on the number of literals and variables and returns the best found rule. 
This beam search method is repeated several times for each predicate $P$. 
Most rules found during one run of the beam search typically entail similar sets of examples. 
To promote diversity within each run of beam search, we discard rules that are subsumed by previously found rules.


We employ several well-known techniques to speed up the search.  
First, instead of checking isomorphism for every pair of candidate rules, we efficiently select non-isomorphic rules by hashing each one using a straightforward generalization of the Weisfeiler-Lehman labeling procedure \cite{weisfeiler1968reduction}. Then, we only check if two rules are isomorphic if they have the same hash value, and if so one of them is removed. Second, the algorithm maintains a set $\textit{Forbidden}$ of minimal rules which entailed zero positive examples in the previous iterations of the beam search. Before evaluating new candidate rules, the algorithm discards candidate rules which are subsumed by a rule from the set $\textit{Forbidden}$. Third, to reduce the negative plateau effect, known from relational learning \cite{Alphonse2008}, we add to every constructed rule a literal $\textit{AllDiff}(V_1,\dots,V_k)$, which is true iff all variables in its argument are mapped to different terms. This also improves the interpretability of the rules.


\section{Weight Learning}\label{secWeightLearning}

Let us first assume that an ordering of the formulas $(\alpha_1 = \bot, \alpha_2 \dots, \alpha_n)$ is given, and we want to learn weights $\lambda_1 \leq ...\leq \lambda_n$ which maximize the likelihood of a set of local examples $\mathcal{E}$ that have been sampled from $P_{\Upsilon,k}$. These weights can be found by solving the following optimization problem:
\begin{itemize}
\item Variables: $\lambda_1', \lambda_2', \dots, \lambda_{n}'$.
\item Maximize: $\prod_{\omega \in E} P(\omega) = \prod_{i = 1}^n (1-\lambda_i')^{|\mathcal{E}_{i+1}|-|\mathcal{E}_{i}|}$ where 
$\mathcal{E}_i = \{ \omega \in \mathcal{E} | \omega \models  \alpha_i \wedge \dots \wedge \alpha_n \}$.
\item Subject to: 
\begin{align}
\label{constr1}\lambda_1' \leq \lambda_2' \leq \dots \leq \lambda_{n}' \\
\label{constr2}\sum_{i=1 }^{k} (1-\lambda_i') \cdot \left(|M_{i+1}| - |M_{i}|\right)  = 1
\end{align} 
where $M_i = \{ \omega | \omega \models \alpha_i \wedge \dots \wedge \alpha_n \}$ \green{and (\ref{constr2}) forces probabilities of all possible worlds to sum to~$1$.}
\end{itemize}
This optimization problem can be converted to a geometric programming problem, similar to the geometric programming encoding proposed in \cite{kuzelka.ecai.2016}. Note that geometric programming problems can be converted to convex programming problems by a change of variables, and can thus be solved using standard convex programming methods \cite{Boyd2007}. 

We can think of $\mathcal{E}$ as an IID sample from the set of local examples in the multi-set $\{ \omega | \omega \in \Upsilon[\mathcal{S}], \mathcal{S} \subseteq \mathcal{C}, |\mathcal{S}| = k \}$, where $\Upsilon = (\mathcal{A},\mathcal{C})$ is the given global example. However, assuming all $\alpha_i$ are constant-free, it is easy to check that we will get the same values (in expectation) of the parameters $|\mathcal{E}_i|$ if we instead use the set $\{ \omega | \omega = \Upsilon\langle\mathcal{S}\rangle, \mathcal{S} \subseteq \mathcal{C}, |\mathcal{S}| = k \}$. 
\ifappendix

\else
A detailed description of how we can efficiently estimate the parameters $|\mathcal{E}_i|$ and $|M_i|$ is provided in the online appendix\footnotemark[\ref{note1}].
\fi

Computing the parameters $|\mathcal{E}_{i}|$ and $|M_{i}|$ needed for weight learning is difficult (\#P-hard), so the algorithm uses a greedy approach to search for the best ordering of the formulas. It starts with a possibilistic logic theory containing only the learned hard rules. It iteratively tries to add, at each possible positions, one rule $\alpha$ from the set of candidate rules found by the structure learning algorithm. If adding the rule $\alpha$ increases the likelihood score, we keep $\alpha$ in the theory, at the position that yielded the best improvement. 
This approach permits caching and reusing many of the parameter computations (i.e. the computed parameters for many cuts of the stratified theory will be the same for many iterations of the algorithm).
During the learning process, the algorithm simplifies the constructed theories (using a relational SAT solver). It removes rules which are implied by other rules in the theory that have higher weights, and it also removes redundant literals from the individual rules.

\begin{table*}[t]
\caption{The possibilistic logic theory learned in the Yeast-Proteins dataset, not showing the hard rules and actual weights (but note that $\lambda_\bot < \lambda_1 < ... < \lambda_5$). All rules are implicitly constrained by $\textit{AllDiff}$ constraints. }\label{tab:yeast}
\begin{tabular}{ r } \hline
 \dots \quad\quad(112 hard constraints not shown here)\\
 $(\textit{Complex}(A, B) \leftarrow \textit{ProteinClass}(A, C), \textit{Interaction}(D, A), \textit{Complex}(D, B), \wedge \textit{ProteinClass}(D, C), \lambda_{5})$  \\
 \quad\quad\quad\quad$(\textit{Phenotype}(A, B) \leftarrow \textit{Interaction}(C, A) \wedge \textit{ProteinClass}(A, D) \wedge \textit{Phenotype}(C, B) \wedge \textit{ProteinClass}(C, D), \lambda_4) $ \\
 $(\textit{ProteinClass}(A, B) \leftarrow \textit{ProteinClass}(D, B) \wedge \textit{Complex}(A, C) \wedge \textit{Complex}(D, C), \lambda_3)$ \\
 $(\textit{Enzyme}(A, B) \leftarrow \textit{ProteinClass}(A, C) \wedge \textit{Interaction}(A, D) \wedge \textit{Enzyme}(D, B), \textit{ProteinClass}(D, C), \lambda_2)$ \\
 $(\textit{Location}(A, B) \leftarrow \textit{Location}(D, B) \wedge \textit{Complex}(A, C) \wedge \textit{Complex}(D, C), \lambda_1)$\\
 $(\bot, \lambda_\bot)$ \\
 \hline
\end{tabular}
\end{table*}


\section{Experiments}
We compare our approach's learned models to learned MLNs for various MAP inference tasks.  We learned MLNs using the default structure learner in the Alchemy package \cite{mln-structure-learning}.\footnote{{\tt http://alchemy.cs.washington.edu/}} For the MLNs, we used  RockIt \cite{noessner2013rockit} to perform MAP inference. 

\subsection{Methodology}

Our learning algorithm is implemented in Java and uses the SAT4j library \cite{sat4j}. Cryptominisat \cite{soos2010cryptominisat} is used for our implementation of relational version of the model counter \cite{chakraborty2016algorithmic}. It uses the JOptimizer package to solve the geometric programming problems\footnote{{\tt http://www.joptimizer.com}} needed for the maximum likelihood estimation. 

We use two standard SRL datasets: UWCSE and Yeast-Proteins.
The UWCSE dataset described relations among students, professors, papers, subjects, terms and projects in the CS department of the University of Washington. This dataset contains among other the following relations (predicates) {\em AdvisedBy/2}, {\em TempAdvisedBy/2}, {\em Publication/2}, {\em TaughtBy/3}, {\em TA/3}, {\em Student/1}, {\em Professor/1}, {\em PostQuals/1}. This dataset is split into five groups: AI, language, theory, graphics, and systems. We use AI, language and theory as a training set and graphics and systems as a test set. The Yeast-Proteins dataset contains proteins and the relations among them. We use a version in which the interaction relation is symmetric. This dataset contains the following relations: {\em Interacts/2}, {\em Enzyme/2}, {\em Complex/2}, {\em ProteinClass/2}, {\em Function/2}, {\em Phenotype} and {\em Location/2}. We randomly divide the constants (entities) in this dataset into two disjoint sets of equal size. The training set consists of atoms containing only the constants from the first set and the test set contains only the constants from the second set. This ensures that no information leaks from the training set into the test set.
 
We evaluate the performance of the learned models as follows. For each $k = 1,\dots,\textit{k}_{max}$, we sample a set of evidence literals from the test set. We then predict the MAP state by each of the learned models and compute the Hamming error, which measures the size of the symmetric difference of the predicted MAP world and the set of the literals in the test set. We then report the cumulative differences between the errors of the models, as this clearly highlights the overall trends.

\subsection{Results}

\begin{center}
\input{plots}
\end{center}
The possibilistic logic theory learned for the Yeast-Proteins dataset is shown in Table \ref{tab:yeast}. The rules seem to encode meaningful relations that hold in the dataset. For instance, if a protein $A$ is contained in a complex $C$, another protein $D$ is in $C$ as well and $D$ is at location $B$ then $A$ is also at that location. The learned MLN, on the other hand, only contained rules that model the prior probabilities of the individual predicates and one additional rule that expresses the symmetry of the $\textit{Interaction}$ relation. Hence, the only type of prediction made by this MLN consists in computing the symmetric closure of the interaction literals, which is why we do not show any separate baseline prediction for this dataset. 
The possibilistic logic theory has lower Hamming errors for evidence sets up to around 1500 literals (see Figure \ref{fig:hamming}, left panel), which can be seen from the fact that the cumulative difference is increasing over this range. For larger evidence sets, the possibilistic theory intuitively predicts ``too much'', resulting in a higher Hamming error than the MLN predictions.

The theory which was learned for the UWCSE dataset is larger, and is therefore shown in the appendix, where we also show the corresponding learned MLN. The possibilistic logic theory again contains rules which are intuitive, capturing meaningful relations for this domain. The formulas in the MLN are much harder to interpret. As shown in Figure~\ref{fig:hamming}, the possibilistic logic theory again reaches smaller Hamming errors than the learned MLN for small evidence sets, in this case for evidence sets of up to about 250 literals (right panel). It is always better than the baseline which predicts everything not in the evidence as false (middle panel).

Inference in the possibilistic logic theories is, on average, substantially faster than MAP inference in the MLNs (using RockIt). For UWCSE, the speed-up was between one and two orders of magnitude. The possibilistic logic prediction only requires us to solve a logarithmic number of SAT queries, whereas computing MLN MAP predictions requires solving a weighted MAX-SAT problem.


\section{Conclusions}
We have proposed a method for learning relational possibilistic logic theories. These theories are seen as templates for constructing ``ground'' (i.e.\ standard propositional) possibilistic logic theories, similar to how Markov logic networks can be seen as templates for constructing Markov random fields. In particular, as in standard possibilistic logic, each weighted formula has an intuitive interpretation as a constraint on the probability distribution that is being modelled. To formally describe what this probability distribution represents, we have introduced the notion of a relational marginal distribution, which we can intuitively think of as a probability distribution over fixed-sized fragments of a given relational structure. 

Our method jointly models the different predicates from the considered domain. This contrasts with inductive logic programming, which attempts to predict a single target predicate. Our approach to structure learning essentially boils down to applying ILP rule learning methods to each of the predicates in the domain. Another difference with the standard ILP setting is that we are not explicitly given a set of training examples in our setting. To generate such training examples, we explicitly learn a number of hard rules, and only consider negative examples that are non-trivial, in the sense that they cannot be explained by the hard rules alone. Interactions between the different rules are taken into account during a subsequent weight learning step.

The main design consideration of our method was to learn interpretable theories. However, as our experimental results have revealed, our method also leads to more accurate MAP predictions than Markov Logic Networks (MLNs) for small to moderately sized evidence sets. For larger evidence sets, MLNs lead to more accurate predictions, which is intuitively due to the fact that they are better equipped to aggregate large amounts of individually weak pieces of evidence. Inference in possibilistic logic is also considerably faster than methods for computing MAP queries from MLNs.

\section*{Acknowledgments}
This work was supported by a grant from the Leverhulme Trust (RPG-2014-164) and ERC Starting Grant 637277. JD is partially supported by the KU Leuven Research Fund  (C22/15/015), and FWO-Vlaanderen (G.0356.12, SBO-150033).

\ifappendix
\appendix



\section{Models Learned on UWCSE Dataset}

Table \ref{tab:uwcse} displays the possibilistic logic theory learned on the UWCSE dataset. The learned Markov logic network is displayed\footnote{We show the variables in the MLN as upper-case letters for consistency with the rest of the paper. However, note that it is more common for MLNs to use lower-case letters for variables.} in Table \ref{tab:uwcsemln}.

\begin{table*}[t]
\caption{The possibilistic logic theory learned in the UWCSE dataset (here $\lambda_{i} \leq \lambda_{i+1}$, hard rules of the theory are not shown and the variables in the rules are implicitly constrained by $\textit{AllDiff}$ constraints). }\label{tab:uwcse}
\begin{tabular}{ r } \hline
 \dots \quad\quad(116 hard constraints not shown here)\\

$(\textit{Publication}(V0, V1) \leftarrow \textit{TempAdvisedBy}(V3, V1) \wedge \textit{Publication}(V0, V3), \lambda_{16})$ \\
\quad\quad\quad\quad\quad\quad\quad\quad\quad\quad$(\textit{PostQuals}(V0) \leftarrow \textit{TempAdvisedBy}(V0, V3)\wedge \textit{TempAdvisedBy}(V4, V3) \wedge \textit{PostQuals}(V4), \lambda_{15})$ \\
$(\textit{AdvisedBy}(V0, V1) \leftarrow \textit{FacultyAffiliate}(V1), \textit{Publication}(V3, V1) \wedge \textit{Publication}(V3, V0), \lambda_{15})$ \\
$(\textit{PostGenerals}(V0) \leftarrow \textit{Publication}(V2, V0) \wedge \textit{AdvisedBy}(V0, V4), \lambda_{14})$ \\
$(\textit{FacultyAdjunct}(V0) \leftarrow \textit{AdvisedBy}(V2, V0), \textit{FacultyAdjunct}(V4) \wedge \textit{AdvisedBy}(V2, V4), \lambda_{13})$ \\
$(\textit{PostGenerals}(V0) \leftarrow \textit{TaughtBy}(V2, V0, V4)\wedge \textit{Student}(V0), \lambda_{13})$ \\
$(\textit{PostQuals}(V0) \leftarrow \textit{AdvisedBy}(V0, V3) \wedge \textit{ProjectMember}(V4, V3), \lambda_{12})$ \\
$(\textit{AdvisedBy}(V0, V1) \leftarrow \textit{Publication}(V3, V1) \wedge \textit{FacultyAdjunct}(V1) \wedge \textit{Publication}(V3, V0), \lambda_{11})$ \\
$(\textit{PostQuals}(V0) \leftarrow \textit{AdvisedBy}(V0, V3) \wedge \textit{PostQuals}(V4) \wedge \textit{AdvisedBy}(V4, V3), \lambda_{10})$ \\
$(\textit{Publication}(V0, V1) \leftarrow \textit{FacultyAffiliate}(V4), \textit{FacultyAffiliate}(V1) \wedge \textit{Publication}(V0, V4), \lambda_9)$ \\
$(\textit{PostGenerals}(V0) \leftarrow \textit{AdvisedBy}(V0, V3) \wedge \textit{PostGenerals}(V4) \wedge \textit{AdvisedBy}(V4, V3), \lambda_8)$ \\
$(\textit{Faculty}(V0) \leftarrow \textit{TempAdvisedBy}(V2, V0), \lambda_7)$ \\
$(\textit{PostQuals}(V0) \leftarrow \textit{AdvisedBy}(V0, V3) \wedge \textit{FacultyAdjunct}(V3), \lambda_6)$ \\
$(\textit{Faculty}(V0) \leftarrow \textit{AdvisedBy}(V2, V0), \lambda_5)$ \\
$(\textit{Publication}(V0, V1) \leftarrow \textit{FacultyAdjunct}(V4), \textit{FacultyAdjunct}(V1) \wedge \textit{Publication}(V0, V4), \lambda_4)$ \\
$(\textit{PreQuals}(V0) \leftarrow \textit{TempAdvisedBy}(V0, V3), \lambda_3)$ \\
$(\textit{FacultyAdjunct}(V0) \leftarrow \textit{Publication}(V2, V4), \textit{PreQuals}(V4) \wedge \textit{Publication}(V2, V0), \lambda_2)$ \\
 $(\textit{PostQuals}(V0) \leftarrow \textit{AdvisedBy}(V0, V3), \lambda_1)$ \\
 $(\bot, \lambda_\bot)$ \\
 \hline
\end{tabular}
\end{table*}

\begin{table*}[t]
\caption{The possibilistic logic theory learned in the UWCSE dataset (here $\lambda_{i} \leq \lambda_{i+1}$, hard rules of the theory are not shown and the variables in the rules are implicitly constrained by $\textit{AllDiff}$ constraints). }\label{tab:uwcsemln}
\begin{tabular}{ l r } \hline
0 &	$\textit{FacultyEmeritus}(V1)$ \\
-5.64478 &  $\textit{AdvisedBy}(V1,V2)$ \\
-5.10868 &  $\textit{Student}(V1)$ \\
-2.06003 &  $\textit{FacultyAffiliate}(V1)$ \\
-3.18786 &  $\textit{PostQuals}(V1)$ \\
-3.64133 &  $\textit{PostGenerals}(V1)$ \\
-0.80596 &  $\textit{FacultyAdjunct}(V1)$ \\
-0.0391718 & $\textit{TempAdvisedBy}(V1,V1)$ \\
-4.12274 &  $\textit{Publication}(V1,V2)$ \\
-0.0551981 & $\textit{AdvisedBy}(V1,V1)$ \\
-2.95067 &  $\textit{Faculty}(V1)$ \\
-6.56008 &  $\textit{ta}(V1,V2,V3)$ \\
-4.90068 &  $\textit{ProjectMember}(V1,V2)$ \\
-4.44769 &  $\textit{TempAdvisedBy}(V1,V2)$ \\
5.10868  &  $\textit{Professor}(V1)$ \\
-6.31879 &  $\textit{TaughtBy}(V1,V2,V3)$ \\
-4.69704 &  $\textit{PreQuals}(V1)$ \\
1.67999e-06 & $\neg \textit{TaughtBy}(V1,V2,V3) \vee \neg \textit{FacultyAdjunct}(V2) \vee \neg \textit{TempAdvisedBy}(V2,V4)$ \\ 
1.08275  &  $\neg \textit{TaughtBy}(V1,V2,V3) \vee \neg \textit{FacultyAdjunct}(V2) \vee \neg \textit{Publication}(V4,V2)$ \\
6.07986  &  $\textit{Faculty}(V1) \vee \neg \textit{TempAdvisedBy}(V2,V1) \vee \textit{FacultyAffiliate}(V1)$ \\
2.48337  &  $\textit{PreQuals}(V1) \vee \textit{PostQuals}(V1) \vee \textit{ProjectMember}(V2,V1) \vee \neg \textit{TempAdvisedBy}(V1,V3)$ \\
9.33506  &  $\textit{PreQuals}(V1) \vee \textit{PostQuals}(V1) \vee \neg \textit{AdvisedBy}(V1,V2) \vee \textit{PostGenerals}(V1)$ \\
-12.0071 &  $\neg \textit{Student}(V1) \vee \textit{FacultyAdjunct}(V1) \vee \textit{Professor}(V1) \vee \textit{Faculty}(V1) \vee \textit{FacultyAffiliate}(V1)$ \\
-2.05975 &  $\textit{PostQuals}(V1) \vee \textit{FacultyAdjunct}(V1) \vee \neg \textit{Faculty}(V1) \vee \neg \textit{AdvisedBy}(V2,V1) \vee \textit{FacultyAffiliate}(V1)$ \\
 \hline
\end{tabular}
\end{table*}

\section{Algorithms for Parameter Estimation}\label{sec:algorithms}

In this section we provide additional details about algorithms for estimating the parameters of the maximum likelihood geometric programming problem stated in Section \ref{secWeightLearning}.

\subsection{Subset Counting}

Let $\alpha$ be an existentially quantified conjunction (i.e.\ a {\em conjunctive query}) and let us write $\textit{vars}(\alpha)$ for the set of variables occurring in $\alpha$. 
Then the \emph{$k$-extension} of $\alpha$ is the conjunction $\exists V_1',V_2',\dots,V_k' : \alpha \wedge \textit{card}(k,\allowbreak V_1,\allowbreak \dots,\allowbreak V_m,\allowbreak V_1',\allowbreak \dots,\allowbreak V_k')$ where $V_1,\dots,V_m$ are the variables contained in $\textit{vars}(\alpha)$, $V_1',\dots,V_k'$ are new variables and the constraint $\textit{card}(k,\allowbreak V_1,\allowbreak \dots,\allowbreak V_m,\allowbreak V_1',\allowbreak \dots,\allowbreak V_k')$ is true iff the set $\{V_1,\allowbreak \dots,\allowbreak V_m,V_1',\allowbreak \dots,\allowbreak V_k'\}$ has cardinality $k$.

\begin{definition}[Matching subset counting]
Consider an example $\Upsilon = (\mathcal{A},\mathcal{C})$, $k\in\mathbb{N}$, and an existentially quantified conjunction $\alpha$. The matching subset counting problem is to count the cardinality of the set $\{ \mathcal{S} | \mathcal{S} \subseteq \mathcal{C}, |\mathcal{S}| = k, \Upsilon\langle \mathcal{S} \rangle \models \alpha \}$. The elements $\mathcal{S}$ of this set are called matching subsets for $\alpha$.
\end{definition}

A naive solution would be to use a CSP solver \cite{csp} to obtain all tuples that make the $k$-extension of $\alpha$ satisfied, and to convert these into sets. In general, this approach actually has an optimal worst-case complexity when the CSP solver cannot do better than exhaustive search. However, its disadvantage is that even if the $k$-extension of $\alpha$ contained just $k$ variables, this method might discover each matching subset $k!$ times. Moreover, for conjunctive queries whose corresponding CSP problem can be solved faster than by brute-force search (e.g.\ for bounded hypertree-width formulas \cite{gottlob2014treewidth}) this approach may not have worst-case optimality. Therefore, we now describe another approach which also relies on CSP solving, but which needs to generate fewer solutions per matching set in the worst case. 

The pseudocode of our proposed algorithm for enumerating matching subsets is shown in Algorithm \ref{alg:subsets}. The algorithm assumes access to a CSP solver which is capable of answering the following type of queries $\mathbf{CSP}(\alpha,V,\Upsilon)$: given a global example $\Upsilon = (\mathcal{A},\mathcal{C})$, a conjunctive query $\alpha$ and a variable $V \in \textit{vars}(\alpha)$, return the set of all constants $c \in \mathcal{C}$ such that $\mathcal{A} \models \alpha\theta$ where $\theta = \{ V/c \}$ is a substitution. Such queries can either be implemented directly in the solver or by making multiple calls to an external CSP solver; in our implementation we have adopted the former approach. The algorithm uses these queries to iteratively build the set of matching subsets. It can be shown that it generates at most $2^k$ different solutions of the CSP problems per matching subset, which even for small values of $k$ is considerably better than the $k!$ solutions generated per matching subset by the naive method (e.g.\ $2^6 = 64$ whereas $6! = 720$). For CSP problems from a tractable subclass, Algorithm \ref{alg:subsets}'s runtime can be shown to be bounded by $O(2^k \cdot |T| \cdot f_{CSP}(k,n) + f_{pre}(k,n))$ where $T$ is the output, $n$ is the length of the input, $f_{CSP}(k,n)$ is the time needed for checking the existence of a solution of the CSP problem (after preprocessing) and $f_{pre}(k,n)$ is the time for preprocessing (e.g.\ for establishing global consistency).

\begin{algorithm}[tb]
\caption{\sc All-Matching-Subsets}
\label{alg:subsets}
{\small
\begin{algorithmic}[1]
\Require{Global example $\Upsilon$, conjunctive query $\alpha$ (which is assumed to be a $k$-extension), integer $k > 0$.}
\Ensure{The set of all matching $k$-subsets of $\Upsilon$ for the query $\alpha$.}
\State{{\bf Let} $V_1, \dots, V_m$ be a list of all variables from $\alpha$, $\textit{Current} := \{ \emptyset \}$}
\State{{\bf For} $i = 1, \dots, m$ {\bf do}}
\State{\quad$\textit{Next} := \emptyset$}
\State{\quad{\bf Foreach} $\mathcal{S} \in \textit{Current}$ {\bf do}}
\State{\quad\quad {\bf Let} $\alpha' := \alpha \wedge \textit{in}(V_1,\mathcal{S}) \wedge \dots \wedge \textit{in}(V_{i-1},\mathcal{S})$ /* constraints $\textit{in}(V,\mathcal{S})$ require the value of $V$ to be in $\mathcal{S}$ */}
\State{\quad\quad {\bf Let} $\mathcal{P} := \mathbf{CSP}(\alpha',V_i,\Upsilon)$.}
\State{\quad\quad $\textit{Next} := \textit{Next} \cup \{ \mathcal{S}\cup \{c\} | c \in \mathcal{P} \}$.}
\State{\quad{\bf EndForeach}}
\State{\quad{$\textit{Current} := \textit{Next}$}}
\State{{\bf EndFor}}
\State{{\bf Return} $\textit{Current}$}
\end{algorithmic}}
\end{algorithm}

For larger datasets, counting the number of matching subsets exactly is not feasible. Therefore we next describe an approximate counter, which relies on a recent algorithm called ApproxMC2 \cite{chakraborty2016algorithmic}. ApproxMC2 is a method for propositional model counting. Since it uses SAT solvers as black-box oracles, it can be used for other problems as long as their solutions can be represented as fixed-length boolean vectors. In particular, it can be naturally used for approximate subset counting, since we can represent the $k$-subsets as Boolean vectors in which exactly $k$ entries are true. These entries correspond to indicator variables which are true if the corresponding constants are contained in the subset. To apply ApproxMC2, which relies on partitioning the space by long XOR constraints, all we need is to support the propagation of XOR constraints in the CSP solver. We propagate global XOR constraints using Gaussian elimination over GF[2]. We do not represent the indicator variables explicitly as decision variables of the CSP problem, but only in the equations over GF[2] inside the global constraint. If a constant is not present in domain of any variable then the corresponding indicator variable must be set to false. Likewise if a variable is assigned a constant $c$ then the corresponding indicator variable must be set to true. If an indicator variable is forced to be zero by Gaussian elimination, we can remove it from all domains.

In many cases, when the density of matching subsets is high, we can get reliable estimates in a much simpler way. In particular, we also implemented a third method which relies on uniformly drawing a small number of $k$-subsets from $\Upsilon$ and estimating the confidence intervals. If the relative size of the confidence interval is below a pre-fixed threshold, we use this value instead of employing the other, more costly methods.

Our overall approach is then to first try the exact method with a small maximum limit. If this fails, we use the simple sampling-based algorithm. If that fails as well, we try the exact method with larger maximum limit. Finally, if this fails as well, we use the algorithm based on ApproxMC2.

\subsection{Model Counting}

To compute the parameters of the optimization problem for likelihood maximization, we also need to compute model counts for the theories consisting of non-ground clauses over sets of $k$ constants. To this end, we implemented two algorithms based on the ApproxMC2 algorithm. The first version, which works best for small $k$, simply grounds the theory and computes the model count using ApproxMC2. The second version exploits the fact that the ApproxMC2 algorithm runs a SAT solver to decide if the theory has at most $t$ solutions (where $t$ is a threshold based on given tolerance parameters). It is not always necessary to ground the whole theory to check this. Therefore this second version uses a strategy based on cutting plane inference \cite{riedel08}, grounding the rules iteratively only when they become violated, where checking whether a non-ground rule is violated can be done using a CSP solver.




\fi

\bibliographystyle{named}
\bibliography{bibliography}

\end{document}